\pdfoutput=1
\documentclass[10pt]{scrartcl}

\usepackage[english]{babel} 
\usepackage[protrusion=true,expansion=true]{microtype} 
\usepackage{amsmath,amsfonts,amsthm,amssymb,color,epsfig,fullpage} 
\usepackage{verbatim,fancyvrb}
\usepackage{alltt}
\usepackage{tikz}
\usepackage{enumerate}
\usepackage{caption}
\usepackage{subfigure}
\usepackage[]{algorithm}
\usepackage{algpseudocode}
\usepackage{stmaryrd}
\usepackage{afterpage}
\usepackage{url}
\usepackage[english=american]{csquotes}


\newtheorem{definition}{Definition}

\newtheorem{proposition}[definition]{Proposition}


\def\N{{\mathbb N}}

\def\R{{\mathbb R}}


\DeclareMathOperator*{\argmin}{argmin}
\DeclareMathOperator{\dtw}{\delta}

\newcommand{\commentout}[1]{}


\newcommand{\abs}[1]{\mathop{\left\lvert #1 \right\rvert}} 
\newcommand{\args}[1]{\mathop{\left( #1 \right)}} 

\newcommand{\norm}[1]{\mathop{\left\lVert #1 \right\rVert}}
\newcommand{\cbrace}[1]{\mathop{\left\{ #1 \right\}}}

\newcommand{\argsS}[2]{\mathop{\left( #1 \right)#2}} 

\newcommand{\normS}[2]{\mathop{\left\lVert #1 \right\rVert#2}}



\renewcommand{\S}[1]{{\mathcal{#1}}}           	

%
{%
\end{array}\right.}

\usepackage{booktabs}
\begin{document}

\title{Revisiting Inaccuracies of Time Series Averaging under Dynamic Time Warping}
\author{Brijnesh Jain \\
 Technische Universit\"at Berlin, Germany\\
 e-mail: brijnesh.jain@gmail.com}
 
\date{}
\maketitle

\begin{abstract} 
This article revisits an analysis on inaccuracies of time series averaging under dynamic time warping conducted by \cite{Niennattrakul2007}. The authors presented a correctness-criterion and introduced drift-outs of averages from clusters. They claimed that averages are inaccurate if they are incorrect or drift-outs. Furthermore, they conjectured that such inaccuracies are caused by the lack of triangle inequality. We show that a rectified version of the correctness-criterion is unsatisfiable and that the concept of drift-out is geometrically and operationally inconclusive. Satisfying the triangle inequality is insufficient to achieve correctness and unnecessary to overcome the drift-out phenomenon. We place the concept of drift-out on a principled basis and show that sample means as global minimizers of a Fr\'echet function never drift out. The adjusted drift-out is a way to test to which extent an approximation is coherent. Empirical results show that solutions obtained by the state-of-the-art methods SSG and DBA are incoherent approximations of a sample mean in over a third of all trials.
\end{abstract}

\section{Introduction}

Time series such as stock prices, weather data, biomedical measurements, and biometrics data are sequences of time-dependent observations. Comparing time series is a fundamental task in various data mining applications \cite{Aghabozorgi2015,Esling2012,Fu2011}. One challenge in comparing time series is to eliminate their temporal differences \cite{Sakoe1978}. A common and widely applied technique to deal with such temporal variation is the \emph{dynamic time warping} (dtw) distance \cite{Mueller2007}. The dtw-distance has been applied to diverse data mining problems such as speech recognition \cite{Myers1981}, gesture recognition \cite{Alon2009,Reyes2011}, electrocardiogram frame classification \cite{Huang2002}, and alignment of gene expressions \cite{Aach2001}. 

Since the 1970ies, one research direction is devoted to the problem of time series averaging \cite{Abdulla2003,Cuturi2017,Gupta1996,Hautamaki2008,Lummis1973,Marteau2016,Petitjean2011,Rabiner1979,Schultz2018}. A common technique to average time series is based on first aligning the time series with respect to the dtw-distance and then synthesizing the aligned time series to an average. Several variations of this approach have been applied to improve nearest neighbor classifiers and to formulate centroid-based clustering algorithms in dtw-spaces \cite{Abdulla2003,Hautamaki2008,Morel2018,Oates1999,Petitjean2016,Rabiner1979,Soheily-Khah2016}. 

While numerous heuristics have been devised, a theoretical understanding of time series averaging is still in its early stages. Few exceptions are results on the concept of sample mean. A sample mean is any time series that minimizes the sum of squared dtw-distances from the sample time series. Examples of important theoretical results are NP-hardness of computing a sample mean \cite{Bulteau2018}, sufficient conditions of existence \cite{Jain2016b}, necessary conditions of optimality \cite{Schultz2018},  and convergence of the DTW Barycenter Averaging (DBA) algorithm  after a finite number of iterations \cite{Schultz2018}.\footnote{The DBA algorithm is an important state-of-the-art method proposed by \cite{Hautamaki2008,Petitjean2011}. In \cite{Petitjean2011}, it has been shown that DBA is a decreasing algorithm, which is not sufficient for convergence. Convergence after a finite number of iterations has been proved by \cite{Schultz2018}.} 

\medskip

In this article, we discuss two phenomena concerning inaccuracies of time series averaging raised by \cite{Niennattrakul2007}: (i) correctness of averages between two time series, and (ii)  drift-out of averages from their clusters. The authors referred to an averaging algorithm as inaccurate if it either violates their proposed correctness-criterion or suffers from drift outs. In experiments, \cite{Niennattrakul2007} observed that the state-of-the-art averaging methods of that time (2007) were inaccurate. The authors intended to make a first attempt in pointing out some misunderstanding and misuse of current DTW averaging methods at that time. They conjectured that the undesirable phenomena were caused by the lack of the triangle inequality of the dtw-distance. Consequently, they concluded that correctness of averaging in dtw-spaces cannot be guaranteed. The phenomena observed by \cite{Niennattrakul2007} and their conclusion have been cited by several publications, either as a general problem of time series averaging or as a limitation of a certain class of averaging method that can be resolved by other techniques.

The goal of this contribution is to dispel an apparently persisting misconception about the geometry of dtw-spaces. We present the following insights:
\begin{enumerate}
\itemsep0em
\item We show that there is no concept of average that satisfies (a rectified version of) the correctness-criterion (i). In addition, we argue that existence of a triangle inequality is not sufficient for satisfying criterion (i).
\item The concept of drift-out (ii) is inconclusive from an operational as well as geometrical perspective. Furthermore, existence of a triangle inequality is not necessary to ensure that an average does not drift out from its cluster. We prove that a special version of drift-outs based on the concept of sample mean never drifts out. The result can be used -- at least to a limited extent -- to assess whether an approximation of an unknown sample mean is sufficiently accurate.
\item Empirical results show that averages obtained by state-of-the-art methods such as DBA \cite{Hautamaki2008,Petitjean2011} and SSG \cite{Schultz2018} also drift out of a cluster in the sense of \cite{Niennattrakul2007}. This finding suggests that the problem of \enquote{drift-out} of previous methods still persist. However, in the new interpretation, such drift-outs merely mean that both, SSG and DBA are inaccurate approximations of the unknown sample mean. 
\end{enumerate}
The rest of this article is structured as follows: Section 2 introduces background material, Section 3 discusses the correctness-criterion and Section 4 drift-outs. Finally, Section 5 concludes with a summary of the main findings.

\section{Time Series Averaging}

\subsection{Dynamic Time Warping}
We write $[n] = \cbrace{1, \ldots, n}$ for $n \in \N$. A real-valued \emph{time series} is a sequence $x = (x_1, \ldots, x_n)$ with elements $x_i \in \R$ for all $i \in [n]$. We denote the length of $x$ by $\abs{x}$ and the set of all real-valued time series of finite length by $\S{T}$. 

A \emph{warping path} of order $m \times n$ and length $\ell$ is a sequence $p = (p_1 , \dots, p_\ell)$ consisting of $\ell$ points $p_l = (i_l,j_l) \in [m] \times [n]$ such that
\begin{enumerate}
\item $p_1 = (1,1)$ and $p_\ell = (m,n)$ \hfill (\emph{boundary conditions})
\item $p_{l+1} - p_{l} \in \cbrace{(1,0), (0,1), (1,1)}$ for all $l \in [\ell-1]$ \hfill(\emph{step condition})
\end{enumerate}
We denote the set of all warping paths of order $m \times n$ by $\S{P}_{m,n}$. 

A warping path of order $m \times n$ can be visualized as a path in a $[m] \times [n]$ grid, where rows are ordered top-down and columns are ordered left-right. The boundary conditions demand that the path starts at the upper left corner and ends in the lower right corner of the grid. The step condition demands that a transition from one point to the next point moves a unit in either down, right, or diagonal direction. A warping path $p = (p_1, \ldots, p_\ell)\in \S{P}_{m,n}$ defines a warping between two time series  $x = (x_1, \ldots, x_m)$ and $y = (y_1, \ldots, y_n)$ by matching element $x_{i_l}$ with element $y_{j_l}$ whenever $p_l = (i_l,j_l)$ is a point of $p$.

The \emph{cost} of warping time series $x$ and $y$ along warping path $p$ is defined by
\begin{equation*}
C_p(x, y) = \sum_{(i,j) \in p} \argsS{x_i-y_j}{^2},
\end{equation*}
Then the \emph{dtw-distance} of $x$ and $y$ is of the form
\begin{align*}
\dtw(x, y) = \min \cbrace{\sqrt{C_p(x, y)} \,:\, p \in \S{P}_{m,n}}.
\end{align*}
A warping path $p$ with $C_p(x, y) = \dtw^2(x, y)$ is called an \emph{optimal warping path} of $x$ and $y$. 

The dtw-distance is not a metric, because it violates the identity of indiscernibles and the triangle inequality. Instead, the dtw-distance satisfies the following properties for all $x, y \in \S{T}$: (i) $\dtw(x, y) \geq 0$, (ii) $\dtw(x, x) = 0$, and (iii) $d(x, y) = d(y, x)$. Computing the dtw-distance and deriving an optimal warping path is usually solved by applying techniques from dynamic programming \cite{Sakoe1978}. 

\subsection{Sample Mean of Time Series}\label{subsec:sample-mean}

Different forms of time series averages have been proposed. For an overview we refer to \cite{Schultz2018}. A principled formulation is based on the notion of Fr\'echet function \cite{Frechet1948}: Suppose that $\S{S} = \args{x_1, \dots, x_n}$ is a sample of $n$ time series $x_i \in \S{T}$. Then the \emph{Fr\'echet function} of $\S{S}$ is defined by
\[
F: \S{T} \rightarrow \R, \quad z \mapsto \sum_{i=1}^n \dtw\!\argsS{x_i, z}{^2}.
\]
The \emph{sample mean set} is the set  
\[
\S{F} = \cbrace{\mu \in \S{T} \,:\, \mu \in \argmin_{z \in \S{T}} F(z) }
\]
of all global minimizers of $F$. We call an element of $\S{F}$ a \emph{(sample) mean} of $\S{S}$. A mean of a sample of time series always exists but is not unique in general \cite{Jain2016b}. 

Computing a mean of a sample of time series is NP-hard \cite{Bulteau2018}. Efficient heuristics to approximate a mean of a fixed and pre-specified length are the stochastic subgradient (SSG) method \cite{Schultz2018}, soft-dtw \cite{Cuturi2017}, and a majorize-minimize algorithm \cite{Hautamaki2008,Petitjean2011} that has been popularized by \cite{Petitjean2011} under the name DTW Barycenter Averaging (DBA) algorithm.

\section{Correctness of Time Series Averaging}

\subsection{The Correctness-Criterion}

In \cite{Niennattrakul2007}, correctness of an average $\mu$ of two time series $x$ and $y$ is based on the assumption that an average \enquote{\emph{should equally contain characteristics from both original time series}}. Formally, an average $\mu$ of $x$ and $y$ is correct if 
\begin{align}\label{eq:dtw-surface}
\delta(x, \mu) = \delta(\mu, y).
\end{align}
Data mining algorithms such as k-means that use averages violating Eq.~\eqref{eq:dtw-surface} are considered as likely being incorrect \cite{Niennattrakul2007}. In experiments, \cite{Niennattrakul2007} used an equivalent formulation of Eq.~\eqref{eq:dtw-surface} to show that the averaging algorithm proposed by \cite{Gupta1996} works incorrectly.

\subsection{A Geometric Perspective on Correctness}

To discuss the correctness-criterion manifested in Eq.~\eqref{eq:dtw-surface}, it is instructive to consider averages in Euclidean spaces. The Euclidean version of Eq.~\eqref{eq:dtw-surface} takes the form
\begin{align}\label{eq:hyperplane}
\norm{x - \mu} = \norm{y - \mu}, 
\end{align}
where $x, y, \mu \in \R^n$ and $\norm{\cdot}$ is the Euclidean norm. The set of all points $\mu \in \R^n$ satisfying Eq.~\eqref{eq:hyperplane} forms a hyperplane $\S{H}_{x,y}$ that is perpendicular to the line segment $\overline{xy}$ and includes the midpoint of $\overline{xy}$. 

The hyperplane $\S{H}_{x,y}$ contains points arbitrarily remote from $x$ and $y$. Such points resist the common understanding of an average as a measure of central location. In addition,  points on $\S{H}_{x,y}$ remote from $x$ and $y$  are unsuitable as centroids for data mining applications such as k-means clustering. The same argument holds for time series $\mu$ that satisfy Eq.~\eqref{eq:dtw-surface} but are arbitrarily remote from the original time series $x$ and $y$. 

To make sense of the correctness criterion, \cite{Niennattrakul2007} demands that the time series $\mu$ in Eq.~\eqref{eq:dtw-surface} is an average of $x$ and $y$, where the concept of average is a measure of central location (see e.g.~\cite{Niennattrakul2007}, Section 3.3). Translating this assumption to Euclidean spaces gives $\mu = (x+y)/2$. In this case, we obtain 
\begin{align}\label{eq:midpoint}
\norm{x - \mu} = \norm{y - \mu} = \frac{1}{2}\norm{x - y}.
\end{align}
A point $\mu$ satisfying Eq.~\eqref{eq:midpoint} is a \emph{midpoint} of $x$ and $y$. In Euclidean spaces, a midpoint always exists and is unique. As we will see below, a midpoint $\mu$ of $x$ and $y$ also minimizes the Fr\'echet function \cite{Frechet1948}
\begin{align*}
F(z) = \frac{1}{2}\args{\normS{x-z}{^2}+\normS{y-z}{^2}}.
\end{align*}
As in Section \ref{subsec:sample-mean} a global minimizer $\mu_*$ of $F(z)$ is called a mean of $x$ and $y$. A mean $\mu_*$ has the distinguishing property that $F(\mu_*)$ is the variance of $x$ and $y$. A mean exists and is unique, because $F(z)$ is strictly convex. We obtain the mean by setting the gradient of $F(z)$ to zero and solving the equation. In doing so, we arrive at the average $\bar{\mu} = (x+y)/2$. Thus, in Euclidean spaces the concepts of midpoint, mean, and average coincide, that is $\mu = \mu_* = \bar{\mu}$. These relationships are invalid in arbitrary distance spaces, in particular in dtw-spaces as we will see next.  

\medskip

Inspired by the geometry of Euclidean spaces, we constrain Eq.~\eqref{eq:dtw-surface} by the midpoint-property in order to exclude non-central time series $\mu$ that are far from times series $x$ and $y$. A \emph{midpoint} of time series $x$ and $y$ is any time series $\mu$ satisfying  
\begin{align*}
\delta(x, \mu) = \delta(\mu, y) = \frac{1}{2}\delta(x, y).
\end{align*}
Moreover, a mean $\mu_*$ of time series $x$ and $y$ is any minimizer of the Fr\'echet function 
\begin{align*}
F(z) = \frac{1}{2}\args{\delta\argsS{x, z}{^2}+\delta\argsS{y, z}{^2}}
\end{align*}
for all $z \in \S{T}$. Finally, an \emph{average} $\bar{\mu}$ of two time series $x$ and $y$ is any time series obtained by some time series averaging method. As in \cite{Niennattrakul2007}, the definition of average is kept vague to cover the various methods proposed for time series averaging. In this sense, a sample mean is a special case of an average.

In contrast to Euclidean spaces, the concepts of midpoint and mean do not coincide in DTW-spaces. As stated in Section \ref{subsec:sample-mean}, a mean of two time series always exists but is not necessarily unique. A midpoint, however, does not exist in general. Existence of midpoints is guaranteed only in geodesic spaces \cite{Bridson1999}, which are a special subclass of metric spaces. This shows that the dtw-space is not a geodesic space, because the dtw-distance is not a metric. Moreover, since there are metric spaces that are not geodesic, existence of the triangular inequality is not sufficient for existence of a midpoint as suggested by \cite{Niennattrakul2007}. Even if a midpoint of two time series exists, it may not coincide with a mean of both time series. More generally, a mean of two time series may even violate the less constrained Eq.~\eqref{eq:dtw-surface}. 

\commentout{
We conclude this section with a remark on the statistical perspective on correctness of an average. In statistics there is no concept of average (measure of central location) that is considered as correct. Common examples of measures of central location are the arithmetic mean, median, mode, and geometric mean to mention a few, where the arithmetic mean is the epitome of all measures of central location. The arithmetic mean of two points coincides with the midpoint property. All other measures of central locations generally violate the midpoint property. This, however, does not imply that those measures are incorrect.

In dtw-spaces, common existing measures of central location are the sample mean \cite{Cuturi2017,Petitjean2011,Schultz2018}, sample median \cite{Jain2016b}, symmetric means \cite{Kruskal1983,Schultz2018}, and medoids. The correctness-criterion yields just another measure of central location not related to any of the other measures. From a statistical perspective, there is no rationale to qualify one criterion as correct and all others as incorrect. 
}

\subsection{Numerical Simulation}\label{subsec:exp01}

In this numerical simulation, we study to which extent sample means deviate from the correctness-criterion.

\medskip

\begin{table}[t]
\footnotesize
\centering
\begin{tabular}{l@{\qquad}rrl}
\toprule
Data Set & \multicolumn{1}{c}{\#} & \multicolumn{1}{c}{$n$} & Type\\ 
\midrule
ItalyPowerDemand & 1096 & 24 & SENSOR\\ 
SyntheticControl & 600 & 60 & SIMULATED\\ 
SonyAIBORobotSurface2 & 980 & 65 & SENSOR\\ 
SonyAIBORobotSurface1 & 621 & 70 & SENSOR\\ 
ProximalPhalanxTW & 605 & 80 & IMAGE\\ 
ProximalPhalanxOutlineCorrect & 891 & 80 & IMAGE\\ 
ProximalPhalanxOutlineAgeGroup & 605 & 80 & IMAGE\\ 
PhalangesOutlinesCorrect & 2658 & 80 & IMAGE\\ 
MiddlePhalanxTW & 553 & 80 & IMAGE\\ 
MiddlePhalanxOutlineCorrect & 891 & 80 & IMAGE\\ 
MiddlePhalanxOutlineAgeGroup & 554 & 80 & IMAGE\\ 
DistalPhalanxTW & 539 & 80 & IMAGE\\ 
DistalPhalanxOutlineCorrect & 876 & 80 & IMAGE\\ 
DistalPhalanxOutlineAgeGroup & 539 & 80 & IMAGE\\ 
TwoLeadECG & 1162 & 82 & ECG\\ 
\bottomrule
\end{tabular}
\caption{List of~$15$ UCR time series data sets. Columns~\# and~$n$ show the number and length of time series, respectively. 
The last column refers to the respective application domains.}
\label{tab:ucr}
\end{table}

For the simulation, we used the $15$ UCR datasets~\cite{Chen2015} listed in Table \ref{tab:ucr}. Every dataset consists of time series of identical length and comes with a pre-defined training and test split. We only considered datasets with short time series, because computing a sample mean is NP-hard \cite{Bulteau2018}. We merged the training and test sets of every dataset. 

\medskip

For every dataset, we randomly sampled $100$ pairs $(x, y)$ of time series. For every pair, we first computed a sample mean $\mu$ using the dynamic program proposed by \cite{Brill2018}. Then we calculated the dtw-distances $\dtw(x, y)$, $\dtw(x, \mu)$, and $\dtw(\mu, y)$. As in \cite{Niennattrakul2007}, the error percentage
\[
\text{err}_{\text{eq}}(\mu) = 100\frac{\abs{\dtw(x, \mu) - \dtw(\mu, y)}}{\max \cbrace{\dtw(x, \mu), \dtw(\mu, y)}}
\]
measures to which extent the sample mean $\mu$ violates the property of equidistance from $x$ and $y$. The error percentage
\[
\text{err}_{\text{mid}}(\mu) = 100\frac{\abs{\dtw(x, y) - \dtw(x, \mu) - \dtw(\mu, y)}}{\dtw(x, y)}
\]
measures to which extent the sample mean $\mu$ violates the midpoint-property.  

\medskip 

Table \ref{tab:result:midpoint-property} summarizes the error percentages of the sample means. The average error percentages $\text{err}_{\text{eq}}$ and $\text{err}_{\text{mid}}$ over $1,500$ trials are $6.29 \%$ ($\pm 5.2$) and $15.84 \%$ ($\pm 10.09$), respectively. The error percentages $\text{err}_{\text{eq}}$ ($\text{err}_{\text{mid}}$) is zero in $5$ ($1$) trials out of $1,500$ trials. These results suggest that sample means of two time series typically violate the equidistance and midpoint property. This finding indicates that the concepts of mean and midpoint typically differ in DTW-spaces and coincidences can occur exceptionally, which is in stark contrast to Euclidean spaces.

\begin{table}[h]
\footnotesize
\centering
\begin{tabular}{lc@{\qquad}rrr@{\qquad}rrr}
\toprule
dataset && \multicolumn{3}{c}{$\text{err}_{\text{eq}}$}& \multicolumn{3}{c}{$\text{err}_{\text{mid}}$} \\ 
&$n_{\text{eq}}|n_{\text{mid}}$& avg & std & max & avg & std & max \\
\midrule
ItalyPowerDemand  & $2|1$ & 6.39 & 5.85 & 27.63 & 10.79 & 6.80 & 31.32 \\
SyntheticControl      && 6.19 & 5.76 & 27.38 & 18.98 & 12.92 & 52.87 \\
SonyAIBORobotSurface2 && 5.49 & 3.89 & 17.23 & 14.60 & 4.89 & 27.30 \\
SonyAIBORobotSurface1 && 4.93 & 4.78 & 25.48 & 12.30 & 6.38 & 31.65 \\
ProximalPhalanxTW &$1|0$ & 7.19 & 5.80 & 18.57 & 15.26 & 11.07 & 29.08 \\
ProximalPhalanxOutlineCorrect & $1|0$& 6.48 & 5.24 & 18.47 & 15.43 & 10.45 & 30.16 \\
ProximalPhalanxOutlineAgeGroup & $1|0$& 7.34 & 5.49 & 18.76 & 16.42 & 10.71 & 28.25 \\
PhalangesOutlinesCorrect && 6.94 & 5.27 & 19.76 & 17.92 & 8.88 & 36.59 \\
MiddlePhalanxTW && 5.99 & 4.35 & 15.98 & 15.69 & 10.00 & 28.35 \\
MiddlePhalanxOutlineCorrect && 6.32 & 4.61 & 17.07 & 15.54 & 9.67 & 30.89 \\
MiddlePhalanxOutlineAgeGroup && 5.82 & 4.46 & 17.96 & 16.29 & 9.84 & 41.15 \\
DistalPhalanxTW && 6.42 & 5.21 & 18.92 & 15.73 & 11.14 & 37.58 \\
DistalPhalanxOutlineCorrect && 7.82 & 5.65 & 23.06 & 18.80 & 10.16 & 38.58 \\
DistalPhalanxOutlineCorrect && 6.33 & 5.38 & 19.14 & 16.00 & 11.39 & 32.62 \\
TwoLeadECG && 4.70 & 5.22 & 31.59 & 17.83 & 11.09 & 49.03 \\
\midrule
\textbf{total} &$5|1$& 6.29 & 5.20 & 31.59 & 15.84 & 10.09 & 52.87 \\
\bottomrule
\end{tabular}
\caption{Average error percentages $\text{err}_{\text{eq}}$ and $\text{err}_{\text{mid}}$, standard deviations, and maximum error percentages of sample means. The average is taken over $100$ trials for every dataset. The numbers $n_{\text{eq}}|n_{\text{mid}}$ count how often $\text{err}_{\text{eq}}(\mu) = 0$ and $\text{err}_{\text{mid}}(\mu) = 0$, resp., occurred. If no numbers are given, then the respective error percentages are always larger than zero.}
\label{tab:result:midpoint-property}
\end{table}

\section{Drift-Out Phenomenon}

\subsection{Central Regions and Drift-Outs}

According to \cite{Niennattrakul2007}, drift-outs are averages not in central location of a cluster. The original definition apparently followed the intuition shaped by Euclidean geometry. A similar depiction as in Fig.~\ref{fig:ex-drift-out-motivation} was used by \cite{Niennattrakul2007} to illustrate drift-outs. Here, we present a slightly more general approach.

\begin{definition}\label{def:drift-out-01}
Let $\S{S} = \cbrace{x_1, x_2, x_3} \subseteq \S{T}$ be a cluster of three time series and let $\S{R} = \cbrace{\mu_1, \mu_2, \mu_3}  \subseteq \S{T}$ be a set of reference time series. The $\S{R}$-\emph{central region} of cluster $\S{S}$ is the set 
\[
\S{C_{S,R}} = \cbrace{x \in \S{T} \,:\, \delta(x, x_k) \leq \delta(\mu_k, x_k) \text{ for all } k \in \cbrace{1,2,3}}.
\] 
A time series $x \in \S{T}$ is said to be \emph{drifted-out} of $\S{S}$ with respect to $\S{R}$ if $x$ is not contained in $\S{C_{S,R}}$.
\end{definition}

We call the conditions $\delta(x, x_k) \leq \delta(\mu_k, x_k)$ the \emph{centrality conditions}. In \cite{Niennattrakul2007}, the references $\mu_k \in \S{R}$ are averages of the two time series from $\S{S} \setminus\cbrace{x_k}$ for every $k \in \cbrace{1,2,3}$. Since averages in \cite{Niennattrakul2007} are solutions of an arbitrary averaging method, any time series can be an average. For this reason, Definition \ref{def:drift-out-01} imposes no restrictions on the reference set $\S{R}$.

\definecolor{Green}{rgb}{0, 0.5, 0}
\begin{figure}[t]
\centering
\includegraphics[width=0.9\textwidth]{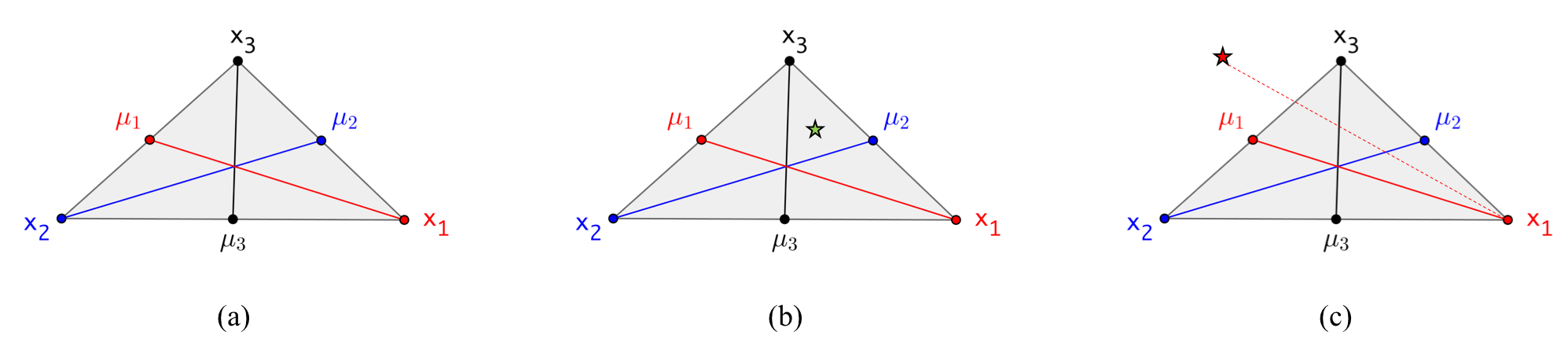} 
\caption{Illustration of drift-outs and central elements in the Euclidean plane. Plot (a) shows a triangle with three vertices $x_1, x_2, x_3$ and the midpoints $\mu_1, \mu_2, \mu_3$ of their respective opposite sides as references. Plot (b) shows a point (green star) inside the triangle satisfying the centrality conditions $\norm{{\color{Green}{\star}} - x_k} \leq \norm{\mu_k - x_k}$ for all $k = \cbrace{1,2,3}$. Plot (c) shows a point (red star) drifted out of the triangle. The dashed line segment $\overline{x_1 {\color{red}{\star}}}$ is longer than the red line segment $\overline{x_1 \mu_1}$. Thus, at least one centrality condition is violated.}
\label{fig:ex-drift-out-motivation}
\end{figure}

\subsection{An Operational Perspective on Drift-Outs}
In experiments, \cite{Niennattrakul2007} observed that the averaging method proposed by \cite{Gupta1996} suffers from the drift-out phenomenon. They state that an averaging method whose solutions drift out is incorrect.

To test whether an averaging algorithm $A$ is correct, \cite{Niennattrakul2007} applied algorithm $A$ to compute the reference averages $\mu_k \in \S{R}$ of the two time series $\S{S} \setminus\cbrace{x_k}$ and the average $\mu_{\S{S}}$ of the cluster $\S{S}$. Then the centrality conditions were checked. If at least one centrality condition was violated, the average $\mu_{\S{S}}$ was regarded as drifted out of cluster $\S{S}$.  

This approach of testing drift-outs is inconclusive. To see this, we consider the two scenarios depicted in Figure \ref{fig:ex-err-drifts}. In the first scenario, the average $\mu_{\S{S}}$ of cluster $\S{S}$ is correct with respect to some measure. However, at least one reference $\mu_k$ is severely distorted such that $\delta(\mu_{\S{S}}, x_k) \gg \delta(\mu_k, x_k)$. In this scenario, a perfect average $\mu_{\S{S}}$ is considered as drifted-out of $\S{S}$ due to a faulty reference. For the second scenario, we assume that an average $\mu_{\S{S}}$ and all references $\mu_k$ are severely distorted such that the centrality conditions are satisfied although $\mu_{\S{S}}$ is by no means central. In this case, a drift-out is falsely claimed as central to the data.

\begin{figure}[t]
\centering
\includegraphics[width=0.95\textwidth]{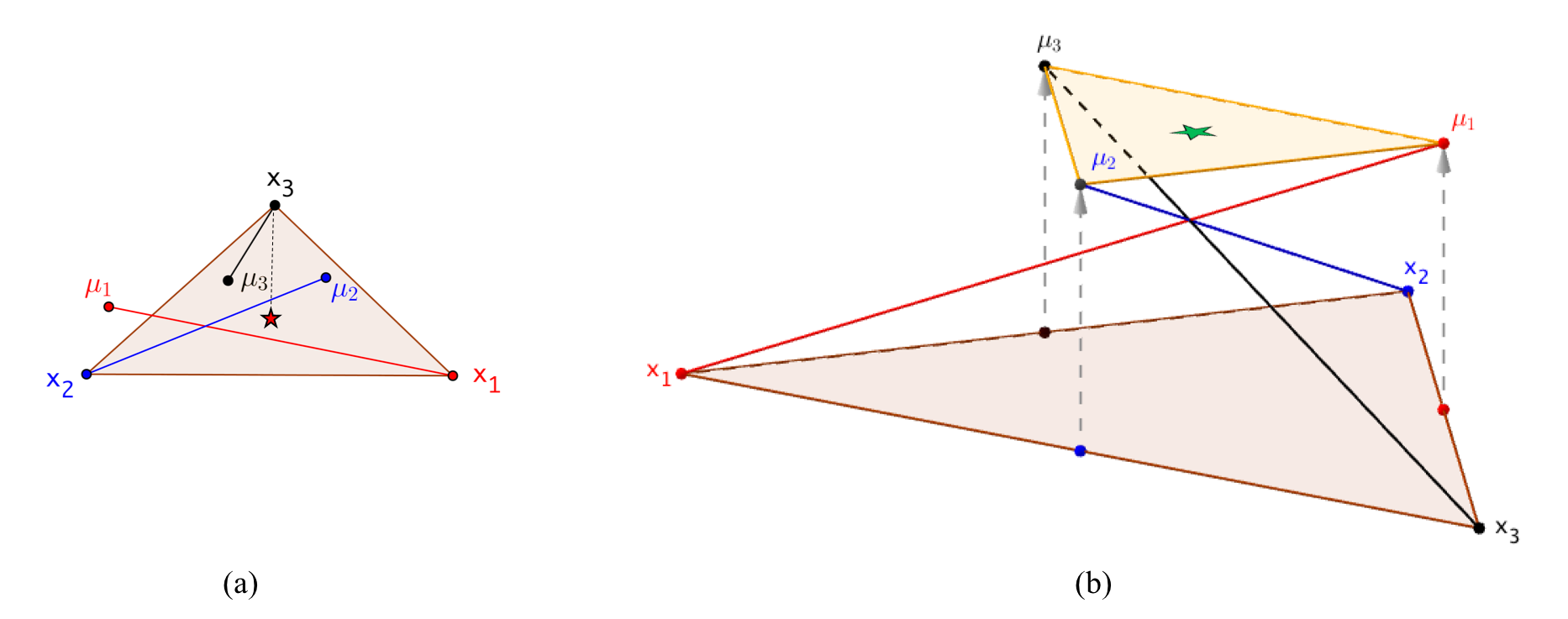} 
\caption{Illustration of a false drift-out (a) and a false central element (b). In plot (a) the average $\mu_{\S{S}}$ shown by a red star is the arithmetic mean of the vertices $x_1, x_2, x_3$. The references $\mu_1, \mu_2, \mu_3$ can be interpreted as faulty approximations of the midpoints of the triangle's sides. The point $\mu_{\S{S}}$ violates the centrality condition with respect to vertex $x_3$ and reference $\mu_3$. Thus, $\mu_{\S{S}}$ is recognized as a drift-out though it is located in a central region of the triangle. In plot (b) the average $\mu_{\S{S}}$ shown by a green star and the references $\mu_1, \mu_2, \mu_3$ are obtained by shifting the average of the three vertices and their pairwise midpoints orthogonal to the plane defined by the triangle. The average $\mu_{\S{S}}$ is a drift-out that is falsely recognized as a central element because all references drifted out of the triangle in the same way as $\mu_{\S{S}}$.}
\label{fig:ex-err-drifts}
\end{figure}

Another limitation of Definition \ref{def:drift-out-01} is that comparing the drift-out rates of two averaging algorithms is not straight-forward. From the foregoing discussion follows that drift-out rates are not comparable if every averaging algorithm constructs its own reference set. Consequently, it is necessary to find common reference sets that are suitable to ensure a fair comparison of different averaging algorithms.

To study drift-out phenomena in a more consistent way, we suggest a reference set that is independent of the choice of averaging algorithm. 
\begin{definition}
Let $\S{S} = \cbrace{x_1, x_2, x_3} \subseteq \S{T}$. A \emph{mean-based reference set} is a set $\S{R} = \cbrace{\mu_1, \mu_2, \mu_3}  \subseteq \S{T}$ consisting of sample means $\mu_k$ of $\S{S} \setminus \cbrace{x_k}$ for all $k \in \cbrace{1,2,3}$. 
\end{definition}

\medskip

Mean-based reference sets suffer from two drawbacks: (i) NP-hardness of computing a sample mean \cite{Bulteau2018}, and (ii) non-uniqueness of a sample mean.

With regard to the first limitation, currently, the most efficient approach to compute a sample mean is a dynamic program with complexity $\S{O}(k 2^k n^{2k+1})$, where $k$ is the number of time series to be averaged and $n$ is the length of the longest time series. Thus, the complexity of computing a sample mean of two time series is $\S{O}(8 n^5)$ and of three time series is $\S{O}(24 n^{7})$. Using a contemporary laptop\footnote{MacBook Pro (Retina, 15-inch, Late 2013), 2.3 GHz Intel Core i7, 16 GB 1600 MHz DDR3.}, computing sample means of two time series of length $100$ and of three time series of length $23$ both take approximately a minute. This indicates that mean-based references $\mu_k$ can be computed for time series of moderate length, whereas sample means $\mu_{\S{S}}$ of three time series can be computed only for the smallest time series.  

With regard to the second limitation, non-uniqueness of a sample mean can result in different mean-based reference sets, each of which can determine different central regions. The latter implies that there are points that drift out for one reference set but are contained in the central region defined by another reference set. Empirical evidence on the UCR benchmark datasets \cite{Chen2015} suggests that non-uniqueness of a sample mean occurs exceptionally \cite{Brill2018}.

Given a mean-based reference set, the next result shows that sample means can never drift-out.

\begin{proposition}\label{prop:centroid}
Let $\mu_{\S{S}}$ be a sample mean of cluster $\S{S} = \cbrace{x_1, x_2, x_3}\subseteq \S{T}$. Then $\mu_{\S{S}}$ is contained in the $\S{R}$-central region of $\S{S}$ for every mean-based representation set $\S{R}$.
\end{proposition}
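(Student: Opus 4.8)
The plan is to verify the three centrality conditions $\delta(\mu_{\S{S}}, x_k) \le \delta(\mu_k, x_k)$ one index at a time; by the symmetric roles of $x_1, x_2, x_3$ it suffices to treat $k = 1$. I would write $F(z) = \delta(x_1,z)^2 + \delta(x_2,z)^2 + \delta(x_3,z)^2$ for the Fr\'echet function of the whole cluster $\S{S}$ and $G(z) = \delta(x_2,z)^2 + \delta(x_3,z)^2$ for the Fr\'echet function of the reduced sample $\S{S}\setminus\{x_1\} = \{x_2,x_3\}$. The crucial structural observation is the additive decomposition $F(z) = \delta(x_1,z)^2 + G(z)$, which isolates the contribution of the omitted time series $x_1$ from exactly the part that the reference mean $\mu_1$ is constructed to minimize.

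Next I would exploit the two optimality properties that are available by definition. Since $\mu_{\S{S}}$ is a global minimizer of $F$, we have $F(\mu_{\S{S}}) \le F(\mu_1)$; since $\mu_1$ is a sample mean of $\{x_2,x_3\}$, it is a global minimizer of $G$, so $G(\mu_1) \le G(\mu_{\S{S}})$. Substituting the decomposition into the first inequality and then applying the second yields the chain
\[
\delta(x_1,\mu_{\S{S}})^2 + G(\mu_{\S{S}}) = F(\mu_{\S{S}}) \le F(\mu_1) = \delta(x_1,\mu_1)^2 + G(\mu_1) \le \delta(x_1,\mu_1)^2 + G(\mu_{\S{S}}).
\]
Cancelling the common term $G(\mu_{\S{S}})$ between the two ends leaves $\delta(x_1,\mu_{\S{S}})^2 \le \delta(x_1,\mu_1)^2$, and taking nonnegative square roots (legitimate because $\delta \ge 0$) gives the desired condition $\delta(\mu_{\S{S}}, x_1) \le \delta(\mu_1, x_1)$.

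Repeating the identical argument with $F$ decomposed around $x_2$ and around $x_3$ establishes the remaining two centrality conditions, so that $\mu_{\S{S}} \in \S{C_{S,R}}$, as claimed. I do not anticipate a genuine obstacle: the whole content of the proof is the telescoping cancellation made possible by the additive split of the Fr\'echet function together with the two minimality inequalities. The only subtleties worth a remark are that the argument invokes nothing beyond \emph{global} optimality, so it is insensitive to \emph{which} sample mean $\mu_k$ is selected when the mean is non-unique and therefore holds simultaneously for every mean-based reference set; and that the slack $G(\mu_{\S{S}}) - G(\mu_1) \ge 0$ provides a lower bound on $\delta(x_1,\mu_1)^2 - \delta(x_1,\mu_{\S{S}})^2$, quantifying how strictly the condition is satisfied. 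This mirrors the Euclidean situation, where the centroid lies strictly inside the region cut out by the medians.
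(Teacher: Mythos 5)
Your proof is correct and is essentially the paper's own argument: both rest on the additive decomposition $F(z) = \delta(x_k,z)^2 + F_k(z)$ (your $G = F_k$) combined with the two global-minimality inequalities $F(\mu_{\S{S}}) \le F(\mu_k)$ and $F_k(\mu_k) \le F_k(\mu_{\S{S}})$, differing only in the order in which the two inequalities are chained. Your closing remark that only global optimality is used, so the result holds for every mean-based reference set despite non-uniqueness, matches the paper's final observation as well.
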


\begin{proof}
Let $\S{S} = \cbrace{x_1, x_2, x_3} \subseteq \S{T}$ and let $\S{R} = \cbrace{\mu_1, \mu_2, \mu_3}  \subseteq \S{T}$ be an arbitrarily selected mean-based reference set, where the references $\mu_k$ are the sample means of $\S{S} \setminus \cbrace{x_k}$ for all $k \in \cbrace{1,2,3}$. Suppose that $\mu_{\S{S}}$ is a mean of $\S{S}$. Then $\mu_{\S{S}}$ minimizes the Fr\'echet function 
\[
F_{\S{S}}(z) = \sum_{i=1}^3\delta(x_i, z)^2.
\]
Let $k\in \cbrace{1,2,3}$ and let $\S{S}_k = \S{S}\setminus\cbrace{x_k} = \cbrace{x_i, x_j}$. As a sample mean of $\S{S}_k$, the time series $\mu_k$ is a global minimizer of the Fr\'echet function 
\[
F_k(z) = \delta(x_i, z)^2 + \delta(x_j, z)^2.
\]
We have 
\begin{align*}
\delta(x_k, \mu_{\S{S}})^2 = F(\mu_{\S{S}}) - F_k(\mu_{\S{S}})
\stackrel{(1)}{\leq} F(\mu_{\S{S}}) - F_k(\mu_k)
\stackrel{(2)}{\leq} F(\mu_k) - F_k(\mu_k)
= \delta(x_k, \mu_k)^2.
\end{align*}
The first inequality follows, because $\mu_{\S{S}}$ is not necessarily a minimizer of $F_k$ giving $F_k(\mu_{\S{S}}) \geq F_k(\mu_k)$. The second inequality follows, because $\mu_k$ is not necessarily a minimizer of $F$ giving $F(\mu_k) \geq F(\mu_{\S{S}})$.  Since $k$ has been arbitrarily chosen, we find that $\mu_{\S{S}} \in \S{C_{S,R}}$. The assertion follows, because the mean-based reference set has been chosen arbitrarily. 
\end{proof}

\medskip

Recall that computing a sample mean $\mu_{\S{S}}$ of a set $\S{S}$ of three time series is intractable for all but the smallest time series, whereas computing mean-based references $\mu_k$ of two time series is feasible for time series of moderate length. Thus, Prop.~\ref{prop:centroid} is useful -- at least in principle -- for testing whether a solution $z$ obtained by a heuristic is a coherent approximation of a sample mean $\mu_{\S{S}}$. We say, solution $z$ is a \emph{coherent approximation} of a sample mean, if it satisfies all three centrality conditions. Otherwise, $z$ is said to be incoherent. Note that a coherent (incoherent) approximation is not necessarily close to (remote from) a sample mean because the dtw-distance fails to satisfy the triangle inequality. 

Finally, we note that Prop.~\ref{prop:centroid} implies that the observed drift-out phenomenon is not due to the lack of triangle inequality of the dtw-distance as hypothesized in \cite{Niennattrakul2007} but rather by the way averages are constructed.

\subsection{A Geometric Perspective}

The intuition behind the concept of drift-out is purely geometrical. Informally a drift-out is a point outside of a cluster. To discuss drift-outs from a geometric perspective, we again resort to the Euclidean geometry. For this, we first restate the definition $\S{R}$-central region and drift-out for Euclidean spaces. 

Suppose that $\S{S} = \cbrace{x_1, x_2, x_3} \in \R^n$ is a cluster of three distinct points in general position. The three points of $\S{S}$ form a triangle $\S{T_S}$. We obtain the triangle $\S{T_S}$ either by connecting the points of $\S{S}$ by straight lines or by taking the convex hull of $\S{S}$. A mean-based reference set $\S{R} = \cbrace{\mu_1, \mu_2, \mu_3}  \subseteq \R^n$ consists of midpoints 
\[
\mu_1 = \frac{x_2+x_3}{2}, \quad  \mu_2 = \frac{x_1+x_3}{2}, \quad \mu_3 = \frac{x_1+x_2}{2}.
\]
The $\S{R}$-central region of $\S{S}$ is the set 
\[
\S{C_{S,R}} = \cbrace{x \in \R^n \,:\, \norm{x - x_k} \leq \norm{\mu_k - x_k} \text{ for all } k \in \cbrace{1,2,3}}.
\]
The point $x \in \R^n$ drift-outs of the triangle $\S{T_S}$ if $x \notin \S{C_{S,R}}$. 

\medskip

In plane geometry, a triangle center is informally a point that is in the middle of the figure with respect to some measure. By 2018, the Encyclopedia of Triangle Centers lists more than $15,000$ triangle centers. Such a large number of triangle centers suggests that there is no \emph{correct} central region of a triangle. Consequently, there is no \emph{correct} concept of drift-out. 

Examples of central regions are the entire triangle as a convex hull of its vertices or the incircle of a triangle. These examples comply with the intuition that a central region of a triangle is somewhere inside the triangle. In contrast, Fig.~\ref{fig:ex-drift-outs} shows that $\S{R}$-central regions can include points outside and exclude points inside a triangle.

\begin{figure}[t]
\centering
\includegraphics[width=0.5\textwidth]{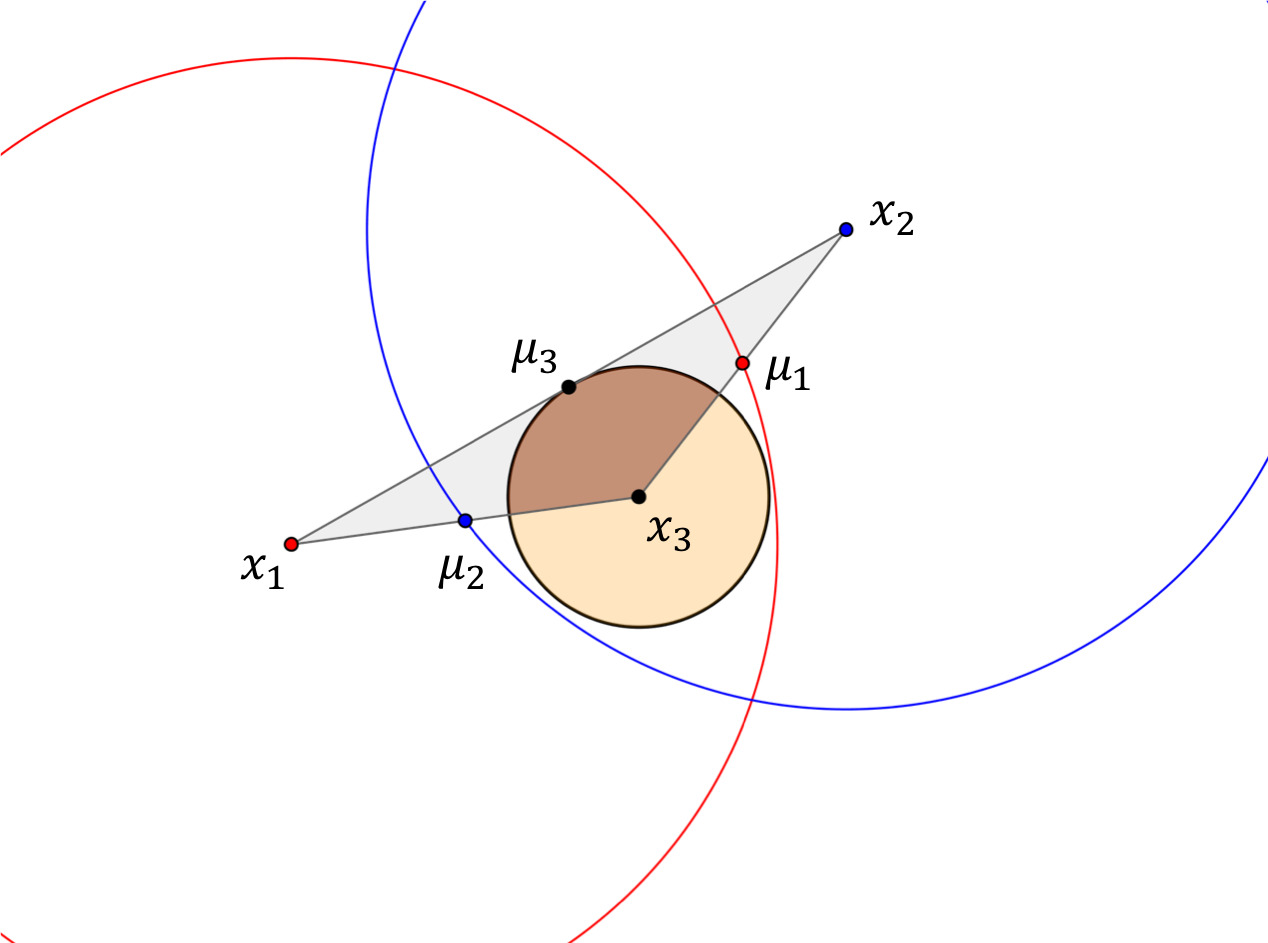} 
\caption{Example of an $\S{R}$-central region, where the references $\mu_1, \mu_2, \mu_3$ are midpoints of the triangle's sides. The $\S{R}$-central region is the intersection of the three circles with respective midpoints $x_k$ and respective radii $\norm{x_k-\mu_k}$ for all $k \in \cbrace{1,2,3}$. In this example, the intersection of the three circle is the shaded circle centered at $x_3$. The darker shading shows the part inside the triangle, whereas the lighter shading contains central points outside the triangle.}
\label{fig:ex-drift-outs}
\end{figure}

At first sight, these considerations suggest looking for other concepts of drift-outs, for example by generalizing central regions contained in a triangle to their counterpart in dtw-spaces. At second sight, this turns out to be a difficult task. As noted in the previous section, dtw-spaces are non-geodesic, meaning that the concept of \emph{straight line} between two time series is undefined in general. Without a well-defined concept of straight line, a concept of a \emph{convex set} is unknown. This implies that the concept of \emph{triangle} in dtw-spaces is unknown. Consequently, the concepts of triangle, central region, and drift-out in dtw-spaces are beyond our geometric understanding. In particular, the Euclidean view of these concepts is no longer valid in dtw-spaces.

\subsection{Numerical Simulation}

In this numerical simulation, we study how well state-of-the-art averaging methods approximate an unknown sample mean of three time series using the concept of drift-out with respect to mean-based reference sets.

\medskip

We used the same UCR datasets as in Section \ref{subsec:exp01}. We randomly sampled $100$ clusters $\S{S} = \cbrace{x, y, z}$ consisting of three time series from every UCR dataset. For every cluster $\S{S}$, we first computed sample means $\mu_i$ of $\S{S}\setminus \cbrace{x_i}$ using the dynamic program \cite{Brill2018}. Then we approximated a mean $\mu_{\S{S}}$ of the set $\S{S}$ using the DTW Barycenter Averaging (DBA) algorithm \cite{Petitjean2011} and the Stochastic Subgradient (SSG) method \cite{Schultz2018}.  We counted the number of times an approximation was drifted out of the set $\S{S}$. 

We used the following parameters for all trials: The maximum number of iterations of both mean-algorithms was set to $200$. The initial and final learning rate of SSG were set to $\eta_0 = 0.2$ and $\eta_1 = 0.02$, respectively. The learning rate was linearly decreased. No parameter-optimization for the learning rate was performed.

\begin{table}
\footnotesize
\centering
\begin{tabular}{l@{\qquad}rr}
\toprule
dataset & dba & ssg\\ 
\midrule
ItalyPowerDemand & 38.0 & 37.0 \\
SyntheticControl & 27.0 & 27.0 \\
SonyAIBORobotSurface2 & 15.0 & 15.0 \\
SonyAIBORobotSurface1 & 6.0 & 5.0 \\
ProximalPhalanxTW & 42.0 & 38.0 \\
ProximalPhalanxOutlineCorrect & 44.0 & 42.0 \\
ProximalPhalanxOutlineAgeGroup & 50.0 & 47.0 \\
PhalangesOutlinesCorrect & 47.0 & 47.0 \\
MiddlePhalanxTW & 44.0 & 44.0 \\
MiddlePhalanxOutlineCorrect & 35.0 & 33.0 \\
MiddlePhalanxOutlineAgeGroup & 58.0 & 57.0 \\
DistalPhalanxTW & 45.0 & 45.0 \\
DistalPhalanxOutlineCorrect & 41.0 & 39.0 \\
DistalPhalanxOutlineAgeGroup & 53.0 & 52.0 \\
TwoLeadECG & 21.0 & 20.0 \\
\midrule
\textbf{total} & 37.7 & 36.5\\
\bottomrule
\end{tabular}
\caption{Percentage of DBA and SSG solutions that drifted out of a set of size three.}
\label{tab:result:dropout}
\end{table}

\medskip

Table \ref{tab:result:dropout} presents the percentages of solutions obtained by DBA and SSG that drifted out of a cluster $\S{S}$ of size three. The results show that the approximations obtained by DBA and SSG are inaccurate in more than a third of all trials. In addition, inaccuracies occurred for all datasets. This result shows that neither DBA nor SSG overcomes the drift-out phenomenon observed by \cite{Niennattrakul2007}. Thus, the approximations obtained by DBA and SSG are incoherent in over a third of all cases.

\section{Discussion}

The discussion on correctness- and drift-outs shows how our intuition shaped by Euclidean geometry in two and three dimensions can lead one astray. 

The rectified correctness-criterion refers to the midpoint property and is generally unsatisfiable in non-geodesic spaces and, a fortiori, in non-metric spaces. Satisfying the triangle inequality may not be sufficient to convert the dtw-space to a geodesic space. Empirical findings on time series of moderate length indicate that a sample mean of two time series is typically neither a midpoint nor equidistant from both sample time series. 

Drift-outs refer to averages not in central location of a cluster. This conception of drift-outs is potentially misleading because it assumes an understanding about the geometry of dtw-space, we actually do not have. Since dtw-spaces are non-geodesic, concepts of straight lines, convexity, and triangles are unknown. Hence, the geometric meaning of a drift-out is unclear. Transferring the definition of drift-outs to Euclidean spaces reveals that points inside a triangle can be classified as drift-outs and points outside of a triangle as in central location. 

Under the assumption of mean-based reference sets, sample means of three time series never drift out. This result has two implications: First, the triangle inequality is not required to satisfy the centrality conditions. Second, the definition of drift-out is a way to test to which extent approximations of a sample mean are coherent. Empirical results show that solutions obtained by SSG and DBA are incoherent approximations for over a third of all trials.


\begin{thebibliography}{00}
\setlength{\parskip}{0pt}
\setlength{\itemsep}{0pt plus 0.3ex}
\small
\bibitem{Aach2001}
J.~Aach and G.M.~Church. 
\newblock Aligning gene expression time series with time warping algorithms. 
\newblock \emph{Bioinformatics}, 17(6):495–508, 2001.

\bibitem{Abdulla2003}
W.H.~Abdulla, D.~Chow, and G.~Sin. 
\newblock Cross-words reference template for DTW-based speech recognition systems. 
\newblock \emph{Conference on Convergent Technologies for Asia-Pacific Region}, 2003.

\bibitem{Alon2009}
J. Alon, V. Athitsos, Q. Yuan, and S. Sclaroff.
\newblock A unified framework for gesture recognition and spatiotemporal gesture segmentation.
\newblock \emph{IEEE Transactions on Pattern Analysis and Machine Intelligence}, 31(9): 1685–1699, 2009.

\bibitem{Aghabozorgi2015}
S.~Aghabozorgi, A.S.~Shirkhorshidi, and T.-Y.~Wah.
\newblock Time-series clustering -- A decade review.
\newblock \emph{Information Systems}, 53:16--38, 2015.

\bibitem{Bridson1999}
M.R.~Bridson and A.~Haefliger.
\newblock \emph{Metric Spaces of Non-Positive Curvature}.
\newblock Springer, 1999.

\bibitem{Brill2018}
M.~Brill, T.~Fluschnik, V.~Froese, B.~Jain, R.~Niedermeier, D.~Schultz.
\newblock Exact Mean Computation in Dynamic Time Warping Spaces
\newblock \emph{SIAM International Conference on Data Mining}, 2018.

\bibitem{Bulteau2018}
L.~Bulteau, V.~Froese, and R.Niedermeier.
\newblock Hardness of Consensus Problems for Circular Strings and Time Series Averaging.
\newblock \emph{CoRR}, abs/1804.02854, 2018.

\bibitem{Chen2015}
Y.~Chen, E.~ Keogh, B.~Hu, N.~Begum, A.~Bagnall, A.~Mueen, and G.~Batista.
\newblock \emph{The UCR Time Series Classification Archive}.
\newblock URL: \url{www.cs.ucr.edu/~eamonn/time_series_data/}, 2015.

\bibitem{Cuturi2017}
M.~Cuturi and M.~Blondel.
\newblock Soft-DTW: A Differentiable Loss Function for Time-Series.
\newblock \emph{International Conference on Machine Learning} (ICML~'17), 2017.

\bibitem{Esling2012}
P.~Esling and C.~Agon. 
\newblock Time-series data mining. 
\newblock \emph{ACM Computing Surveys}, 45(1), 2012.

\bibitem{Fu2011}
T.-C.~Fu. 
\newblock A review on time series data mining.
\newblock \emph{Engineering Applications of Artificial Intelligence}, 24(1):164--181, 2011.

\bibitem{Frechet1948}
M.~Fr\'{e}chet.
\newblock Les \'el\'ements al\'eatoires de nature quelconque dans un espace distanci\'e.
\newblock \emph{Annales de l'institut Henri Poincar\'e}, 215--310, 1948.

\bibitem{Gupta1996}
L.~Gupta, D.~Molfese, R.~Tammana, and P.G.~Simos.
\newblock Nonlinear alignment and averaging for estimating the evoked potential.
\newblock \emph{IEEE Transactions on Biomedical Engineering}, 43(4):348--356, 1996.

\bibitem{Hautamaki2008}
V.~Hautamaki, P.~Nykanen, P.~Franti.
\newblock Time-series clustering by approximate prototypes.
\newblock \emph{International Conference on Pattern Recognition}, 2008.

\bibitem{Huang2002}
B. Huang and W. Kinsner.
\newblock ECG frame classification using dynamic time warping. 
\newblock \emph{IEEE Canadian Conference on Electrical and Computer Engineering}, 2002.

\bibitem{Jain2016b}
B.J.~Jain and D.~Schultz.
\newblock A Reduction Theorem for the Sample Mean in Dynamic Time Warping Spaces.
\newblock \emph{CoRR}, abs/1610.04460, 2016.

\bibitem{Lummis1973}
R.~Lummis.
\newblock Speaker verification by computer using speech intensity for temporal registration. 
\newblock \emph{IEEE Transactions on Audio and Electroacoustics}, 21(2):80--89, 1973. 

\bibitem{Marteau2016}
P.-F.~Marteau.
\newblock Times series averaging and denoising from a probabilistic perspective on time-elastic kernels.
\newblock \emph{CoRR}, abs/1611.09194, 2016.

\bibitem{Morel2018}
M.~Morel, C.~Achard, R.~Kulpa, and S.~Dubuisson.
\newblock Time-series Averaging Using Constrained Dynamic Time Warping with Tolerance. 
\newblock \emph{Pattern Recognition}, 74, 2018.

\bibitem{Mueller2007}
M.~M\"uller.
Dynamic time warping
\emph{Information retrieval for music and motion}, 69--84, 2007.

\bibitem{Myers1981}
C.S. Myers and L.R. Rabiner. 
\newblock A comparative study of several dynamic timewarping algorithms for connected word recognition.
\newblock \emph{Bell System Technical Journal}, 60(7): 1389–1409, 1981.

\bibitem{Niennattrakul2007}
V.~Niennattrakul and C.A.~Ratanamahatana.
\newblock Inaccuracies of shape averaging method using dynamic time warping for time series data. 
\newblock \emph{International Conference on Computational Science}, 2007. 

\bibitem{Oates1999}
T.~Oates, L.~Firoiu, and P.R.~Cohen. 
\newblock Clustering Time Series with Hidden Markov Models and Dynamic Time Warping. 
\newblock IJCAI Workshop on Neural, Symbolic and Reinforcement Learning methods for Sequence Learning, 1999.

\bibitem{Petitjean2011}
F.~Petitjean, A.~Ketterlin, and P.~Gancarski. 
\newblock A global averaging method for dynamic time warping, with applications to clustering.
\newblock \emph{Pattern Recognition} 44(3):678--693, 2011.

\bibitem{Petitjean2016}
F.~Petitjean, G.~Forestier, G.I.~Webb, A.E.~Nicholson, Y.~Chen, and E.~Keogh.
\newblock Faster and more accurate classification of time series by exploiting a novel dynamic time warping averaging algorithm. 
\newblock \emph{Knowledge and Information Systems}, 47(1):1--26, 2016.

\bibitem{Rabiner1979}
L.R.~Rabiner and J.G. Wilpon.
\newblock Considerations in applying clustering techniques to speaker-independent word recognition. 
\newblock \emph{The Journal of the Acoustical Society of America}, 66(3): 663--673, 1979.

\bibitem{Reyes2011}
M. Reyes, G. Dominguez, and S. Escalera. 
\newblock Feature weighting in dynamic time warping for gesture recognition in depth data. 
\newblock \emph{IEEE International Conference on Computer Vision Workshops}, 2011.

\bibitem{Sakoe1978}
H.~Sakoe and S.~Chiba. 
\newblock Dynamic programming algorithm optimization for spoken word recognition. 
\newblock \emph{IEEE Transactions on Acoustics, Speech, and Signal Processing}, 26(1):43--49, 1978.

\bibitem{Schultz2018}
D.~Schultz and B.~Jain.
\newblock Nonsmooth analysis and subgradient methods for averaging in dynamic time warping spaces.
\newblock \emph{Pattern Recognition}, 74, 2018.

\bibitem{Soheily-Khah2016}
S.~Soheily-Khah, A.~Douzal-Chouakria, and E.~Gaussier.
\newblock Generalized k-means-based clustering for temporal data under weighted and kernel time warp.
\newblock \emph{Pattern Recognition Letters}, 75:63--69, 2016.
\end{thebibliography}
\end{document}